\definecolor{dark-red}{rgb}{0.4,0.15,0.15}
\definecolor{dark-blue}{rgb}{0,0,0.7}
\let\oldnl\nl% Store \nl in \oldnl
\newcommand{\nonl}{\renewcommand{\nl}{\let\nl\oldnl}}% Remove line number for one line
\newtheorem{thm}{Theorem}[]
\newtheorem{prop}{Property}[]
\title[Optimization using Parallel Gradient Evaluations on Multiple Parameters]{Optimization using Parallel Gradient Evaluations \\ on Multiple Parameters}
\begin{document}
\etocdepthtag.toc{mtchapter}
\etocsettagdepth{mtchapter}{subsection}
\etocsettagdepth{mtappendix}{none}

\maketitle

\begin{abstract}%
We propose a first-order method for convex optimization, where instead of being restricted to the gradient from a single parameter, gradients from multiple parameters can be used during each step of gradient descent. 
This setup is particularly useful when a few processors are available that can be used in parallel for optimization.
Our method
uses gradients from multiple parameters in synergy to update these parameters together towards the optima.
While doing so, it is ensured that the computational and memory complexity is of the same order as that of gradient descent.
Empirical results demonstrate that even using gradients from as low as \textit{two} parameters, our method can often obtain significant acceleration and provide robustness to hyper-parameter settings.
We remark that the primary goal of this work is less theoretical, and is instead aimed at exploring the understudied case of using multiple gradients during each step of optimization.
\end{abstract}

%\begin{keywords}%
%  List of keywords%
%\end{keywords}

\section{Introduction}
Ranging from personal computers to internet-of-things enabled hardware, there exists a plethora of local computational devices that have capabilities of executing a few parallel processes but are otherwise limited in their memory and computational capacities. 
Having optimization procedures designed for such constrained systems can, therefore, enable wider use of machine learning models.
However, gradient descent based optimizers predominantly only use a single new gradient during each step of optimization.
Perhaps one common way to leverage multiple processes is by splitting the computation of the gradient for that \textit{single} parameter (e.g., by distributing data), if possible.
In this paper, we focus on a complementary problem, where we leverage multiple processes for gradient computation of \textit{multiple} parameters.
% Optimization forms a core component of most machine learning systems, where the goal is to find the optima (maximum/minimum) of any given objective function. 
%

Particularly, we restrict our focus to optimization of convex functions.
Formally, for any convex function $f:\mathbb{R}^d \rightarrow \mathbb{R}$, %where $\theta \in \mathbb{R}^{d}$, 
and a fixed domain $\Omega \subset \mathbb{R}^d$ our goal is to find
\begin{align}
    \theta^* \in  \underset{\theta \in \Omega}{\text{arg min}} \,\, f(\theta).
\end{align}

Broadly, popular approaches for finding $\theta^*$ can be grouped into zeroth-order, first-order, or second-order methods.
In the following paragraphs, we briefly review how these approaches can potentially be used for the desired problem setup.  
Detailed discussions are deferred to Appendix \ref{related}.

\textbf{Zeroth-order methods:} Methods like genetic algorithms, evolutionary strategies, etc., work by evaluating the function $f(\theta)$ for multiple values of $\theta$ to estimate the optimization geometry and compute a direction that moves the parameters towards the optima \citep{davis1991handbook,hansen2015evolution}.
In our problem setup, each individual evaluation of $f(\theta)$ could ideally be done in parallel.
However, especially in high dimensions, these methods may scale poorly \citep{thierens1999scalability}, thereby requiring a lot of parallel resources, which are not often available.

\begin{figure*}
    \centering
    \includegraphics[width=0.65\textwidth]{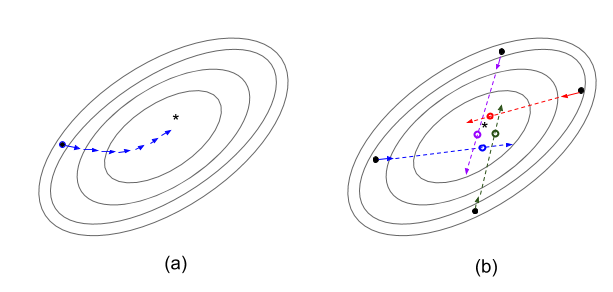}
    \caption{(a) Conventional gradient descent based methods have only one parameter vector which gets iteratively updated. (b) Proposed method maintains a set of $N$ parameters (here, $N=4$) and the gradients at each of those points are used to directly move the parameters near the minimizer of the function, marked as $\star$. Solid arrows depict the gradient and the proposed method uses them to jump to the points marked with empty circles in the direction of gradient. }
    \label{fig:idea}
    \vspace{-25pt}
\end{figure*}

\textbf{First-order methods:} Methods such as gradient descent, ADAM \citep{kingma2014adam}, etc., iteratively update a single parameter vector $\theta$ using the gradient $\nabla f(\theta)$ to move towards the optima. 
Multiple processes can be leveraged by performing independent runs from different initializations and taking the minimum among the runs.
However, over here, optimizing from a given initialization does not make use of the information available from other parallel searches. 
%advantage of the information available Therefore the algorithm loses information from part of the data, and is 
Thereby not making an ideal utilization of the already limited resource. 
Alternatively, often $f : \mathbb{R}^d \times \mathcal D \rightarrow \mathbb{R}$ is a function of data $\mathcal D$.
Multiple processes are often leveraged by distributing computation of $\nabla f(\theta;\mathcal D)$, for a common $\theta$, over mini-batches of data and then aggregating all gradients. 
This approach has the downside of not capturing information about the function at various points. 
In this work, we gain advantage over both the above mentioned approaches and show how $f(\theta; \mathcal D)$, for \textit{different} values of $\theta$ with mini-batches can be used to improve the search for $\theta^*$.

\textbf{Second-order methods: } Optimizers like Newton's method make direct use of the Hessian $\nabla^2 f(\theta)$ to move $\theta$ towards $\theta^*$ \citep{boyd2004convex}.
Access to the curvature of the function makes these methods more robust to hyper-parameters.
However, this requires inverting the Hessian, which can be computationally expensive for our setup.
Several quasi-Newton methods exist that minimize the computation required for estimating the Hessian inverse but still need to keep a matrix having a large $O(d^2)$ memory footprint \citep{nocedal2006quasi}.
Limited memory versions of quasi-Newton methods have also been developed that only need to store past $k$ \textit{sequentially} generated gradient evaluations \citep{liu1989limited}.

\textbf{Contribution Summary:} An ideal optimization procedure for our problem setup would meet the following criteria: 
%
% \begin{itemize}
    %
    % \item
    \textbf{(a)} It should leverage the availability of the few parallel resources judiciously.
    %
    % \item 
    \textbf{(b)} It only uses first-order gradients to be computationally cheap. 
    %
    % \item 
    \textbf{(c)} It does not have large memory footprint, so that it can gracefully scale to large problems.
    %
    % \item 
    \textbf{(d)} Can provide quicker convergence to $\theta^*$, like the second-order methods.
    %
    % \item 
    \textbf{(e)} It is robust to hyper-parameters.
% \end{itemize}

In this work, we propose a new optimization algorithm, called \textit{Gradient Grouping} to satisfy the above requirements.
The key idea behind the proposed algorithm is that it makes use of gradients from $N$ \textit{different} parameters in synergy, where $N$ can be as low as $2$.
During each update, the minima is estimated by extrapolating the $N$ gradients and finding their approximate point of intersection. 
All the parameters are then updated towards this point, and the process is repeated.

% The paper is structured as the following, 
% %
% we first describe our approach, followed by the algorithm, and its convergence properties.
% %
% Subsequently, related literature is reviewed and finally, empirical results comparing our method with the popular optimization methods on convex optimization problems are provided.
% %

\section{Proposed Method}

In this section, we first motivate the underlying principle behind our approach and then derive an update rule.
Unlike existing methods that use only a single gradient from the current timestep or use multiple gradients obtained in the past, what if we had access to several gradients, \textit{from different parameters}, at any update step? 
In such a case, multiple gradients can provide additional information about the loss surface and can thus help in both acceleration and automatically  choosing step sizes.

The core idea is illustrated in Figure \ref{fig:idea}.
Given gradient evaluations from $N$ different points, we propose estimating the optima by extrapolating these gradients and finding their approximate point of intersection.
Formally, given $N$ parameter $\{\theta_i\}$, we propose to update each $\theta_i \in \mathbb{R}^{1\times D}$ as
\begin{align}
   \tilde \theta_i &= \theta_i + \eta_i \nabla f(\theta_i) &  \forall i \in [1, N],
   \label{eqn:gdupdate}
\end{align}
where $\eta_i$ is the corresponding step size and is set such that it directly makes the parameters $\theta_i$ jump to the (approximate) point of intersection of the gradient vectors $\{\nabla f(\theta_i) \}$.

In high dimensions, there is unlikely to be a unique point of intersection.
Therefore, we relax this exact intersection constraint and instead aim at making the parameters update to a point where they are closest to each other. 
Let $\Psi : \mathbb{R}^{d \times N} \rightarrow \mathbb{R}$ be the desired function of all the $\tilde \theta_i$ that is a (proxy) measure for getting towards the intersection point/area.
Our goal is to find,
\begin{align}
    \underset{\{\eta_1,...,\eta_N\}}{\text{arg min}} \,\, \Psi(\tilde \theta_1, ..., \tilde \theta_N),
\end{align}
and subsequently use the obtained $\eta_i$'s to update the respective parameters.
Some candidate choices for $\Psi$ can be a polytope/convex-hull of all the points of intersection, or the radius of the ball containing all the points of intersection.
However, it can also quite likely be that, in high dimensions, there is \textit{not even a single} point of intersection among the gradients.
To still ensure that after the update to the parameters, $\{\theta_i\}$, the resulting set of parameters, $\{\tilde \theta_i\}$, are as close to each other as possible, we define  $\Psi$ as the following, 
 \begin{align}
     \Psi(\tilde \theta_1, ..., \tilde \theta_N) &\coloneqq \sum_{i=1}^N \left \lVert \tilde \theta_i - \frac{\sum_j \tilde \theta_j}{N}\right\rVert^2. \label{eqn:obj}
 \end{align}

 Before proceeding further, we now establish few notations that will be used throughout the paper. 
 Let $\circ$ represent Hadamard (entrywise) product between two matrices of same dimensions defined as $(A\circ B)_{i,j} = A_{i,j} B_{i,j}$. 
 Let, $\Theta$ and $G$ corresponds to matrices with columns containing all the parameter vectors and their gradients, respectively.
Let all the step-sizes, $\eta_i$ for $i^{th}$ parameters, be in a vector $\eta$.
Let $L$ represents a Laplacian matrix for a fully connected graph with $N$ vertices and let $\mathbbm{1}$ denote a ones vector of size $N$.
That is,
    \begin{align}
        \Theta &\coloneqq [\theta_1, ..., \theta_N] & \in \mathbb{R}^{d\times N}, \hspace{60pt}
        G &\coloneqq [\nabla f(\theta_1), ..., \nabla f(\theta_N)] & \in \mathbb{R}^{d \times N}, \\
        \eta &\coloneqq [\eta_1, ..., \eta_N]^\top & \in \mathbb{R}^{N \times 1}, \hspace{60pt}
        \mathbbm{1} &\coloneqq [1, ..., 1]^\top & \in  \mathbb{R}^{N \times 1}, \\
        L &\coloneqq N\cdot I - \mathbbm{1}\mathbbm{1}^\top & \in \mathbb{R}^{N\times N}. \hspace{60pt}
    \end{align}

\begin{thm}
    For any given $N > 1$
    \begin{align}
        \underset{\eta}{\text{arg min}} \,\, \Psi(\tilde \theta_1, ..., \tilde \theta_N) &= - (G^\top G \circ L)^{-1} (G^\top\Theta \circ L) \mathbbm{1}.
    \end{align} 
    \thlabel{lemma:stepsize}
\end{thm}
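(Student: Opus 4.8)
The plan is to pass to matrix notation, recognize $\Psi$ as a Laplacian quadratic form, and then reduce everything to an unconstrained convex quadratic in the step-size vector $\eta$. First I would write the coordinate-wise update \eqref{eqn:gdupdate} compactly as $\tilde\Theta = \Theta + G\,\mathrm{diag}(\eta)$, where $\mathrm{diag}(\eta)$ is the $N\times N$ diagonal matrix carrying $\eta$ on its diagonal; the $i$-th column of $\tilde\Theta$ is then exactly $\theta_i + \eta_i\nabla f(\theta_i)$. Next I would rewrite the objective \eqref{eqn:obj}. Introducing the centering matrix $H \coloneqq I - \tfrac1N\mathbbm{1}\mathbbm{1}^\top$, the columns of $\tilde\Theta H$ are precisely $\tilde\theta_i - \tfrac1N\sum_j\tilde\theta_j$, so $\Psi = \lVert \tilde\Theta H\rVert_F^2$. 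Since $H$ is symmetric and idempotent, $\lVert \tilde\Theta H\rVert_F^2 = \operatorname{tr}(\tilde\Theta^\top\tilde\Theta H)$, and because $H = \tfrac1N L$ by the definition of $L$, we obtain $\Psi = \tfrac1N\operatorname{tr}(\tilde\Theta^\top\tilde\Theta L)$. This is the step that makes the Laplacian appear naturally.

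Writing $D \coloneqq \mathrm{diag}(\eta)$ and substituting $\tilde\Theta = \Theta + GD$, I would expand
\begin{align}
    N\Psi = \operatorname{tr}\!\big((\Theta^\top\Theta + \Theta^\top G D + D G^\top\Theta + D G^\top G D)\,L\big),
\end{align}
which splits into a constant term, two linear-in-$D$ terms, and one quadratic-in-$D$ term. The heart of the argument is a pair of identities that convert these diagonal-matrix traces into Hadamard products with $L$. Specifically, for any matrix $M$ one has $\operatorname{tr}(M^\top D L) = \eta^\top (M\circ L)\mathbbm{1}$, and $\operatorname{tr}(D M D L) = \eta^\top(M\circ L)\eta$ when $M$ is symmetric. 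Applying these with $M = G^\top\Theta$ (for the linear part, after noting the two cross terms are equal by transposition and cyclicity of the trace) and $M = G^\top G$ (for the quadratic part) collapses the expression to
\begin{align}
    N\Psi = \operatorname{tr}(\Theta^\top\Theta\, L) + 2\,\eta^\top(G^\top\Theta\circ L)\mathbbm{1} + \eta^\top(G^\top G\circ L)\,\eta.
\end{align}

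From here the conclusion is mechanical: differentiating in $\eta$ gives $\nabla_\eta(N\Psi) = 2(G^\top\Theta\circ L)\mathbbm{1} + 2(G^\top G\circ L)\eta$, and setting this to zero yields exactly $\eta = -(G^\top G\circ L)^{-1}(G^\top\Theta\circ L)\mathbbm{1}$, the claimed minimizer. To certify that this stationary point is the global minimizer rather than a saddle, I would check convexity: the Hessian is $2(G^\top G\circ L)$, and since $G^\top G$ is a Gram matrix (hence positive semidefinite) and $L$ has eigenvalues $0$ and $N$ (hence is positive semidefinite), the Schur product theorem guarantees $G^\top G\circ L \succeq 0$, with the stated inverse existing precisely when it is positive definite.

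I expect the main obstacle to be the index-chasing behind the two Hadamard identities, together with keeping the bookkeeping straight: one must verify that the two linear cross terms $\operatorname{tr}(\Theta^\top G D\,L)$ and $\operatorname{tr}(D G^\top\Theta\,L)$ coincide (so that the factor of $2$ is correct), and that the diagonal structure of $D$ is what turns $\operatorname{tr}(DG^\top G D L)$ into the clean form $\eta^\top(G^\top G\circ L)\eta$ rather than a full matrix quadratic. These are routine once the correct identities are isolated, so I would state and prove them as a short preliminary computation, after which the gradient calculation and the convexity check are immediate. The only genuinely conceptual step is the recognition $\Psi = \tfrac1N\operatorname{tr}(\tilde\Theta^\top\tilde\Theta L)$, which is what ties the variance-from-mean objective to the Laplacian in the statement.
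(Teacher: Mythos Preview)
Your argument is correct and cleaner than the paper's, though the two routes reach the same normal equations. The paper does not pass to the trace/Frobenius formulation; instead it expands $\Psi$ coordinate-wise, introduces auxiliary vectors $M_i$ (with $(N-1)/N$ in position $i$ and $-1/N$ elsewhere) and diagonal matrices $\Lambda_i=\operatorname{diag}(M_i)$, writes $\Psi=\sum_i\lVert\Theta M_i+G\Lambda_i\eta\rVert^2$, and differentiates in $\eta$ to obtain sums of the form $\sum_i\Lambda_i^\top B\Lambda_i$. Its key lemma (stated as a separate Property) is the combinatorial identity $\sum_i\Lambda_i^\top B\Lambda_i=\tfrac1N\,B\circ L$, proved by explicit entry counting. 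Your approach replaces that machinery with the single observation $\Psi=\tfrac1N\operatorname{tr}(\tilde\Theta^\top\tilde\Theta L)$ via the centering projector $H=\tfrac1N L$, and then the two Hadamard--trace identities $\operatorname{tr}(M^\top DL)=\eta^\top(M\circ L)\mathbbm{1}$ and $\operatorname{tr}(DMDL)=\eta^\top(M\circ L)\eta$ do the work of the paper's Property in one stroke. What you gain is a shorter, coordinate-free derivation that makes the Laplacian's appearance structural rather than emergent, plus an explicit convexity certification via the Schur product theorem that the paper omits from the proof proper (it invokes Schur only later, for a numerical stabilization). What the paper's route buys is a standalone identity about the $\Lambda_i$ matrices that could conceivably be reused, at the cost of heavier index manipulation.
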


\begin{proof}
For brevity, we denote $\Psi(\tilde \theta_1, ..., \tilde \theta_N)$ as $\Psi$.
We begin by expanding \eqref{eqn:obj},
\begin{align}
     \Psi &= \sum_{i=1}^N \left \lVert \tilde \theta_i - \frac{\sum_{i=1}^N \tilde \theta_i}{N}\right\rVert^2 
    \\
     &= \sum_{i=1}^N \left \lVert \theta_i + \eta_i \nabla f(\theta_i) - \frac{\sum_j (\theta_j + \eta_j \nabla f(\theta_j))}{N} \right \rVert^2 
     \\
     &= \sum_{i=1}^N \left |\! \left | \frac{N-1}{N}\theta_i - \frac{\sum_{j \neq i} \theta_j}{N}  + \eta_i \frac{N-1}{N}\nabla f(\theta_i) - \frac{\sum_{j \neq i} \eta_j \nabla f(\theta_j)}{N} \right |\! \right |^2. \label{eqn:non-matrix}
\end{align}

\noindent To solve for $\eta$ that minimizes \eqref{eqn:non-matrix}, we first convert \eqref{eqn:non-matrix} into matrix notations.
To illustrate the intermediate steps, we define the following,
\begin{align}
        M_i &\coloneqq  \left [-\frac{1}{N}, ..., \frac{N-1}{N}, ..., -\frac{1}{N} \right ] & \in \mathbb{R}^{N \times 1}, \\
        \Lambda_i &\coloneqq \text{diag}(M_i) & \in \mathbb{R}^{N \times N},
\end{align}
where $i^\text{th}$ position in $M_i$ is $(N-1)/N$  and the rest are $-1/N$.
%
% The following is then equivalent to
Therefore, \eqref{eqn:non-matrix} can be expressed as,
\begin{align}
    \Psi = \sum_{i=1}^N \lVert \Theta M_i + G\Lambda_i \eta \rVert^2. \label{eqn:matrix}
\end{align}
Solving for $\eta$ that minimizes \eqref{eqn:matrix},
\begin{align}
    \frac{\partial \Psi}{\partial \eta} &= 2\sum_{i=1}^N ( \Theta M_i + G\Lambda_i \eta )^\top (G\Lambda_i ) = 2\sum_{i=1}^N M_i ^\top \Theta ^\top G\Lambda_i  + 2\sum_{i=1}^N \eta^\top \Lambda_i ^\top G^\top G\Lambda_i .
\end{align}
As $M_i = \Lambda_i \mathbbm{1}$, 
\begin{align}
    \frac{\partial \Psi}{\partial \eta} &= 2\sum_{i=1}^N \mathbbm{1}^\top \Lambda_i ^\top \Theta ^\top G\Lambda_i  + 2\sum_{i=1}^N \eta^\top \Lambda_i ^\top G^\top G\Lambda_i 
    = \mathbbm{1}^\top 2\sum_{i=1}^N  \Lambda_i ^\top \Theta ^\top G\Lambda_i  + \eta^\top 2\sum_{i=1}^N  \Lambda_i ^\top G^\top G\Lambda_i . \label{eqn:simplified}
\end{align}
Equating \eqref{eqn:simplified} to $0$ gives
$  \eta^\top \sum_{i=1}^N  \Lambda_i ^\top G^\top G\Lambda_i  = -\mathbbm{1}^\top \sum_{i=1}^N  \Lambda_i ^\top \Theta ^\top G\Lambda_i . 
 $ 
 Therefore,
  \begin{align}
  \eta^\top &= -\mathbbm{1}^\top \left(\sum_{i=1}^N  \Lambda_i ^\top \Theta ^\top G\Lambda_i \right) \left( \sum_{i=1}^N \Lambda_i ^\top G^\top G\Lambda_i \right)^{-1}. \label{eqn:summation}
\end{align}
To simplify \eqref{eqn:summation}, we make use of the following property (See Appendix \ref{apx:proofs} for a proof),
\begin{prop}
\label{prop:broadcast}
For any matrix $B \in \mathbb{R}^{N \times N}$, 
% \begin{align}
    $\sum_{i=1}^N  \Lambda_i^\top B \Lambda_i = \frac{1}{N} B \circ L.$
% \end{align}
\end{prop}
%
% \begin{proof} See Appendix \ref{apx:proofs}.
% \end{proof}

\noindent Using Property \ref{prop:broadcast} in \eqref{eqn:summation}, 
\begin{align}
    \eta^\top &= -\mathbbm{1}^\top \left(\frac{1}{N}\Theta ^\top G \circ L \right) \left( \frac{1}{N}G^\top G \circ L \right)^{-1} = -\mathbbm{1}^\top \left(\Theta ^\top G \circ L \right) \left( G^\top G \circ L \right)^{-1},
\end{align}
Therefore,    $\eta = - (G^\top G \circ L)^{-1} \left(G^\top \Theta \circ L\right) \mathbbm{1}.$
\end{proof}

\section{Exploratory Insights}

\paragraph{One step convergence for quadratics with condition number $1$: } Perhaps interestingly, it can be shown that updating the parameters with step sizes  $\eta = - (G^\top G \circ L)^{-1} \left(G^\top \Theta \circ L\right) \mathbbm{1}$ converges to the optima in a \textbf{single} step for a quadratic function $f(\theta) \coloneqq \frac{1}{2}\theta^\top A \theta$, where $A$ is positive-semi-definite and has condition number $\rho =1$.
This result holds irrespective of the number of dimensions $d$ of $\theta \in \mathbb R^d$, and even with $N=2$.
We refer the readers to Appendix \ref{sec:isometric} for details.

\paragraph{Connections to approximate Newton's method:} The proposed approach uses gradient evaluations at various points on the loss surface to estimate the optima.
From one perspective, it can be seen as obtaining the curvature information of the loss surface and using it to obtain the step-sizes.
This is reminiscent of (quasi-) Newton's methods which use Hessian to estimate the loss surface and get the appropriate direction and the step-size for parameter update.
In Appendix \ref{apx:postcondition}, we discuss this connection is more details.

\section{Empirical Analysis}\label{sec:emp}

In this section, we present an empirical comparison on various convex optimization benchmarks between the proposed approach and popular optimization methods.
We call our proposed method Gradient Grouping (GG).
Exact algorithm is presented in Appendix \ref{sec:algo}.

\paragraph{Robustness to Hyper-parameter: }  We use several multi-class logistic regression datasets to evaluate the performance of our method, and the following baselines: stochastic gradient descent, Nesterov's accelerated gradient method \citep{boyd2004convex}, ADAM \citep{kingma2014adam}, Rmsprop \citep{tieleman2012lecture}, and a first order quasi-Newton method L-BFGS \citep{liu1989limited}.
Figure \ref{fig:performance} provides a comparison of the performance of our method  against performances of the baselines optimized with different values of learning rate.

The plots in Figure \ref{fig:performance} correspond to the most challenging setting for our method 
where \textbf{(a)} the number of available parallel resources is only $2$ and \textbf{(b)} while GG was developed for the deterministic gradient, it is typically infeasible to compute the full-batch gradient. Therefore, we use GG as-is in the setting where function evaluations are stochastic due to mini-batches.
Further, as the proposed method makes use of $2$ resources in parallel as opposed to the baselines that use $1$, we consider a strict setup for a fair comparison by making use of only half the data per computation of gradient.  This ensures that the proposed method neither uses more wall-time nor the total compute time.
Specifically, GG uses a batch size of $32$ for computing each gradient and the baselines use a batch-size of $64$. 

We notice that across all the domains, even without any hyper-parameter optimization, the proposed algorithm achieves nearly the same performance as that achieved by optimally tuned baselines.
This showcases the advantage of leveraging parallel resources to compute gradients for \textit{different} parameter vectors, even in the restricted case when only $2$ gradients can be obtained.
%
% Further, in many practical cases 

In comparison, note that although there exists a learning-rate with which the baselines are able to achieve the same performance, the baselines are particularly sensitive to it and their performance deteriorates significantly when it is not set to the optimal value.
Further, the optimal learning rates for these baselines vary across the domains, thereby necessitating a thorough hyper-parameter search. 

\paragraph{Ablation: Impact of N:} 
In problem setups, where more than $2$ parallel resources are available, a natural question to ask is how well does the proposed method perform then?
%
% To answer this question, we conducted experiments with $N = (2, 4, 6, 8, 10)$ for all of the previous domains.
%
In Appendix 
\ref{apx:ablation} we present this ablation and showcase that while just $2$ parallel gradients sufficed to provide acceleration in high-dimensional problems, our method further benefits when more resources are available.

\begin{figure*}[t]
    \centering
    \includegraphics[width=0.30\textwidth]{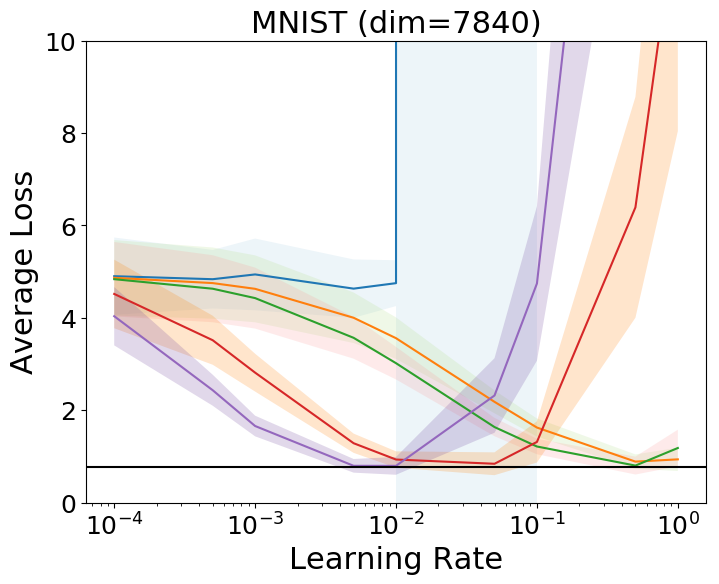}  \quad  \includegraphics[width=0.30\textwidth]{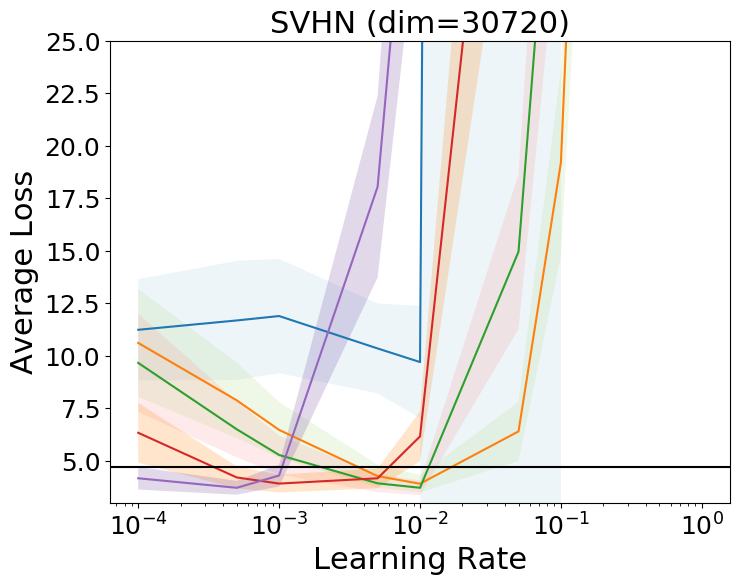} \quad
    \includegraphics[width=0.30\textwidth]{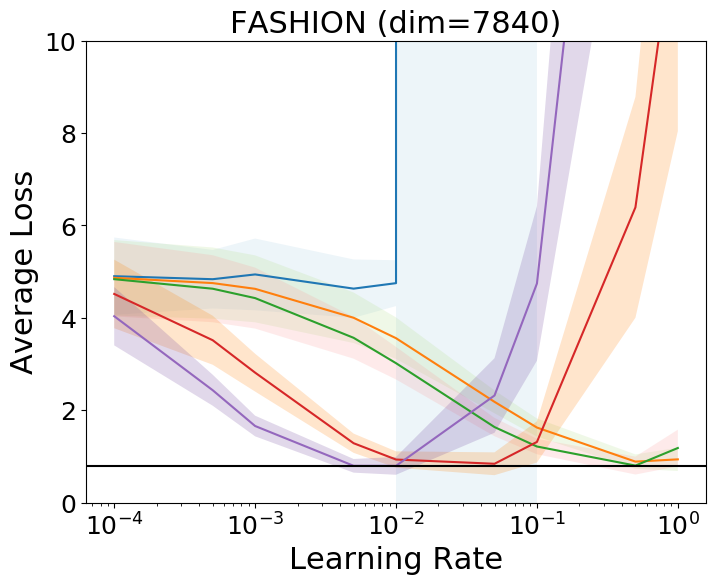}
\\
    \includegraphics[width=0.30\textwidth]{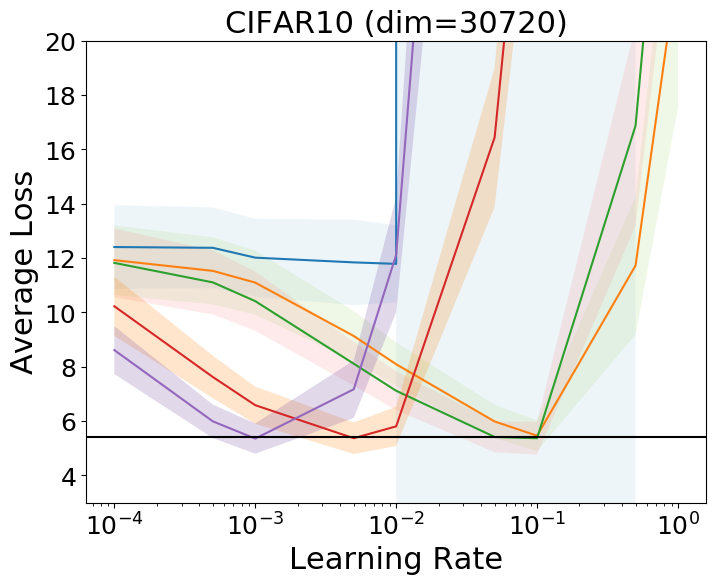} \quad
    \includegraphics[width=0.30\textwidth]{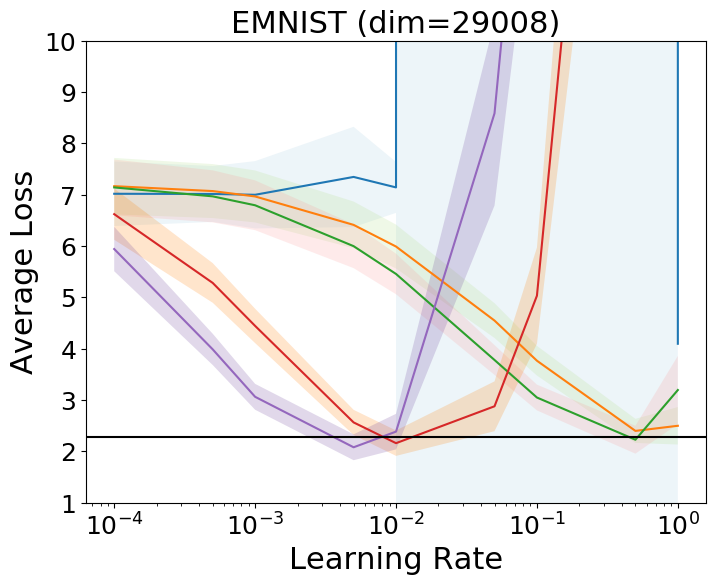} \quad
    \includegraphics[width=0.30\textwidth]{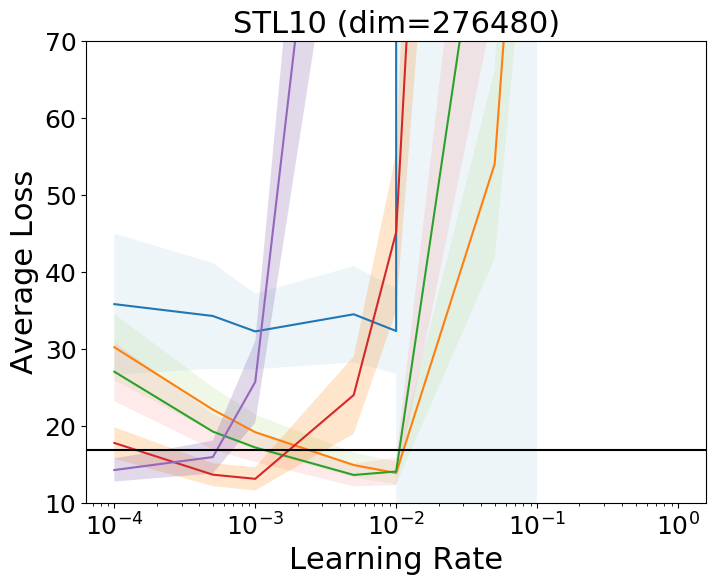}
    \\
    \includegraphics[width=0.80\textwidth]{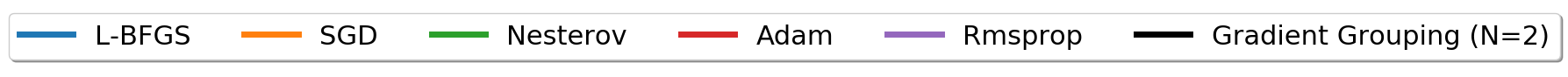}
    \caption{
    % Performance comparison of Gradient Grouping against other popular methods on convex supervised learning problems.
    %
        Title of the sub-plots mention the name of the dataset and the dimension of parameters $\theta$ used. 
    The plots showcase sensitivity of the baselines to their learning rate hyper-parameter.
    Y-axis corresponds to the average loss over the entire training phase.
    The reference line (in black) corresponds to the performance of the proposed method obtained without performing any domain specific hyper-parameter tuning.
    Shaded regions correspond to standard deviation obtained using $10$ trials.
    }
    \label{fig:performance}
\end{figure*}

\section{Conclusion and Future Work}
We presented a convex optimization algorithm that exploits the availability of parallel resources by computing only the first-order gradients of different parameters in parallel and using them in synergy to reach the optima.
Through empirical results, we demonstrated that even with $2$ gradients computed in parallel, acceleration similar to the best tuned baselines can be achieved.
Further, using gradients in synergy achieves this accelerated performance without requiring any hyper-parameter tuning.
Enjoying both low memory and computationally cost, it can be used for quickly optimizing large problems on devices with limited compute resources.

An interesting future direction would be to efficiently combine the proposed method that makes use of gradients computed in parallel, with methods that do curvature correction by making use of past gradients that are computed sequentially.
Addressing this would allow bringing the best of both the paradigms. 
Another research question is to extend this approach to non-convex optimizations. The proposed approach relies upon extrapolating the gradients from $N$ different points. If these points happen to be in different `valleys', it is not clear how to optimally combine these gradients.
%

% \textbf{Potential Negative Impacts: } This work looks at the basic problem of optimization and is not directly aimed at addressing applications that require considerations for potential negative impacts.

% \clearpage 
\bibliography{mybib}
\bibliographystyle{abbrvnat}

\clearpage
\appendix

\onecolumn

\setcounter{thm}{0}
\setcounter{cor}{0}
\setcounter{prop}{0}

% \section*{Appendix}

\begin{center}
    \Large
    Supplementary Material
\end{center}

\etocdepthtag.toc{mtappendix}
\etocsettagdepth{mtchapter}{none}
\etocsettagdepth{mtappendix}{subsection}
\tableofcontents

\section{Detailed Related Work}
\label{related}

The literature on convex optimization is vast and no effort is enough to thoroughly review it.
Here, we briefly touch upon the popular paradigms in convex optimization that are related to our approach and discuss their differences from our method.

\textbf{Federated Learning:}     In many use-cases, due to either large amounts of data, or for accessing data only at a local server, distributed optimization is of prime importance.
    In such a scenario, data is partitioned into multiple sub-sets and for a given parameter, gradient computation with respect to each sub-set is computed in a decentralized fashion.
    The primary challenge over here includes how to aggregate these gradients effectively \citep{shi2015extra,zhang2014asynchronous,yang2019federated}.
    The proposed method presents a complementary procedure, where gradient with respect to different set of \textit{parameters} are used in synergy to update those parameters towards the optima. 

\textbf{Center of mass/gravity method:} In this method, the center of mass (CoM) of a valid search space is computed and then used to partition the valid search space for narrowing down the space consisting the minima.
    This process is iteratively carried out to reduce the domain of search and obtain the final solution \citep{bubeck2015convex}.
    This method is similar to ours as we also iteratively use gradients at various points in the domain to narrow down to the region consisting the optima.
    However, calculating the center of mass is computationally expensive and makes the CoM method dimension dependent.
    Therefore, unlike our approach, this method cannot be scaled to large problems.

\textbf{Genetic algorithms (GAs)/Evolutionary strategy (ES):}     Methods along this idea are for settings where computing the gradient is infeasible or computationally expensive.
    This method maintains a set of parameters, similar to our method, where the function value is evaluated.
    These evaluations are then used to compute an update direction for all the parameters in the set \citep{davis1991handbook}.
    In several cases, this procedure is also combined with gradients.
    For example, in hyper-parameter optimization (like architecture search for neural networks) function evaluation is itself an optimization procedure, where the quality of a hyper-parameter is estimated based on the quality of parameters obtained after performing gradient-descent using those hyper-parameters \citep{bergstra2011algorithms}.
    Another line of work first uses ES as an approximate global search procedure and then runs gradient descent for fine-tuning with a hope to more accurately estimate the global minima \citep{davis1991handbook}.
    To make such evolutionary strategies feasible in high-dimension space,  \citep{maheswaranathan2018guided} propose to only search in a meaningful sub-space of parameters that is obtained using recent history of (surrogate) gradients.
    These still requires a large population size of parameters.

    \textbf{Quasi-Newton methods:} Newton's method is a second-order optimization method which requires pre-conditioning the gradient with the inverse of the Hessian matrix.
    Due to this operation, they scale poorly to problems with large dimensions.
    Quasi-Newton methods (like SR1, LBFGS, etc.)  aim to use only first-order gradients to approximate Newton's method and trade-off computational complexity with convergence rates \citep{nocedal2006quasi}.
    While this is more scale-able than Newton's method, they still require pre-multiplication of the gradient with a $d \times d$ matrix, which can be expensive for large scale problems.
    L-BFGS further trades off performance quality by keeping only the past $k$ gradients that were obtained sequentially \citep{liu1989limited}.
    
    % Note that unlike these and  Natural gradient \citep{amari1998natural} methods that pre-condition the gradient with an approximation of the local curvature, we do not change the update direction.
    % %
    % From a geometric perspective, these methods aim at finding the shortest \textit{distance} path to the minima.
    % %
    % In comparison, one possible way to look at our method is that it aims to find the reduce the \textit{time} by making big leaps along the gradient direction.
    %
    
    \textbf{Conjugate gradients:} In this update procedure, for any $d$ dimensional space, $d$ orthogonal vectors are generated (iteratively) and then line search is used to minimize the objective in each of those directions \citep{bubeck2015convex}.
    Our method can be seen to be doing approximate line search by estimating the minima along the direction of gradient using $N$ separate gradient evaluations.
    This avoids the expensive computation required for both obtaining and iterating over $d$ orthogonal directions, and doing linear search each time.
    
    \textbf{Accelerated methods:} Accelerated optimization procedures like heavy-ball method, Nesterov's method, RMSprop, ADAM, AdaGrad, etc. leverage past gradients to accelerate search \citep{kingma2014adam,lydia2019adagrad}. 
    They are also related to acceleration procedures for ordinary differential equations (ODE), like Runge-Kutta \citep{dormand1980family} and Andersons acceleration \citep{walker2011anderson}.
    Optimizers built around these ODE techniques have also been developed \citep{scieur2016rna,zhang2018direct}. 
    Notice that all these methods use past gradients and function values that were obtained \textit{sequentially}.
    In contrast, our method uses multiple \textit{parallel} gradient evaluations at any update and is thus complementary to these approaches.
    For instance, update directions obtained using any of the above approaches can be substituted in place of $\nabla f(\theta)$ in our framework to leverage parallel processing.
    We leave such combinations for future work.

\section{Remaining Proofs}
\label{apx:proofs}
\begin{prop}
For any matrix $B \in \mathbb{R}^{N \times N}$, 
\begin{align}
    \sum_i  \Lambda_i^\top B \Lambda_i = \frac{1}{N} B \circ L.
\end{align}
\end{prop}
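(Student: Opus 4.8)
The plan is to prove the identity entrywise, exploiting that each $\Lambda_i$ is diagonal. First I would note that since $\Lambda_i = \text{diag}(M_i)$ is symmetric we have $\Lambda_i^\top = \Lambda_i$, so the left-hand side equals $\sum_{i=1}^N \Lambda_i B \Lambda_i$. Because $\Lambda_i$ is diagonal with $j$-th diagonal entry $(M_i)_j$, left- and right-multiplying $B$ merely rescales each entry: the $(j,k)$ entry of $\Lambda_i B \Lambda_i$ is $(M_i)_j\, B_{jk}\, (M_i)_k$. Summing over $i$ and factoring out $B_{jk}$, the whole problem reduces to evaluating the scalar $c_{jk} \coloneqq \sum_{i=1}^N (M_i)_j (M_i)_k$ and showing it equals $\tfrac{1}{N} L_{jk}$.

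Second, I would evaluate $c_{jk}$ by a short counting argument, recalling that $(M_i)_j$ equals $(N-1)/N$ when $i = j$ and $-1/N$ otherwise. For the diagonal case $j = k$, exactly one term (namely $i = j$) contributes $\big((N-1)/N\big)^2$ while the remaining $N-1$ terms each contribute $(1/N)^2$; these sum to $(N-1)/N$. For the off-diagonal case $j \neq k$, the two cross terms $i = j$ and $i = k$ each contribute $-(N-1)/N^2$, whereas the remaining $N-2$ terms each contribute $1/N^2$, and these sum to $-1/N$.

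Third, I would compare against $\tfrac{1}{N} L = I - \tfrac{1}{N}\mathbbm{1}\mathbbm{1}^\top$, whose diagonal entries are $(N-1)/N$ and whose off-diagonal entries are $-1/N$. These match $c_{jk}$ in both cases, so $c_{jk} = \tfrac{1}{N} L_{jk}$ for all $j,k$. Hence the $(j,k)$ entry of the left-hand side equals $\tfrac{1}{N} L_{jk} B_{jk} = \tfrac{1}{N}(B \circ L)_{jk}$, and since this holds for every entry, the matrix identity follows.

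The only delicate point is the bookkeeping in the off-diagonal case: one must correctly count that there are \emph{two} cross terms (not one) and $N-2$ (not $N-1$) remaining terms, since getting these counts right is exactly what makes the signs cancel and the final value $-1/N$ come out correctly.
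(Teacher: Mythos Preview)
Your proof is correct and follows essentially the same approach as the paper: both reduce the identity to computing $\sum_i (M_i)_j (M_i)_k$ (the paper phrases this as $\sum_i M_i M_i^\top$ after writing $\Lambda_i^\top B \Lambda_i = B \circ M_i M_i^\top$) and then carry out the identical counting argument for the diagonal and off-diagonal cases. The only difference is presentational---you work entrywise from the start, while the paper first passes through the Hadamard-product form before doing the same case analysis.
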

\begin{proof}

As $\Lambda_i$ is a diagonal matrix consisting of $M_i$ in the diagonal, $\Lambda B$ corresponds to row broadcast using $M_i$ and $B \Lambda$ corresponds to column broadcast using $M_i$.
Therefore,
\begin{align}
    \sum_i  \Lambda_i^\top B \Lambda_i &= 
    \sum_i  B \circ M_i M_i^\top \\
    &= B \circ \sum_i   M_i M_i^\top \label{eqn:laplacian1}.
\end{align}
In the above equations, $ M_i M_i^\top$ corresponds to a matrix  
where $(N-1)^2/N^2 $ is in the $(i,i)$ cell of the matrix, and both row $ i$ and column $i$ contain $-(N-1)/N^2$,
\[
 M_i M_i^\top = \begin{bmatrix}
1/N^2 & .. & -(N-1)/N^2 & ... & 1/N^2 \\
&&  ... && \\
-(N-1)/N^2 & .. & (N-1)^2/N^2 & ... & -(N-1)/N^2 \\
&&  ... && \\
1/N^2 & .. & -(N-1)/N^2 & ... & 1/N^2
\end{bmatrix},
\]

$\sum_i   M_i M_i^\top$ would imply that every off diagonal entry is a summation of $N-2$ terms of $1/N^2$ and two terms of $-(N-1)/N^2$.
This is because for any entry $(i,j)$, where $i \neq j$, it has $-(N-1)/N^2$ either when it is $(N-1)^2/N^2$ in the cell $(i,i)$ or in the cell $(j,j)$. 
Therefore, each off-diagonal element will sum to $(N-2)/N^2 - 2(N-1)/N^2 = - 1/N$.
Similarly, each diagonal entry will be a sum $N-1$ terms of $1/N^2$ and one term of $(N-1)^2/N^2$, which is equal to $(N-1)/N^2 + (N-1)^2/N^2 = (N-1)/N$.
Therefore,
\begin{align}
 \sum_i   M_i M_i^\top  = \frac{1}{N}\begin{bmatrix}
N-1 & -1 & ... & -1 \\
-1 & N-1 & ... & -1 \\
&&  ... & \\
-1 & -1 & ... & N-1 
\end{bmatrix} = \frac{1}{N}L. \label{eqn:laplacian2}
\end{align}
Using \eqref{eqn:laplacian2} in \eqref{eqn:laplacian1},
\begin{align}
    \sum_i  \Lambda_i^\top B \Lambda_i &= \frac{1}{N} B \circ L. 
\end{align}
\end{proof}

\section{Exploratory Insights}
In this section, we explore some properties of the proposed method for the fundamental case of quadratic functions, and draw some connections to prior methods.

\subsection{One step convergence for quadratics with condition number $\rho = 1$}
\label{sec:isometric}
Perhaps interestingly, it can be shown that even with $N=2$, GG can converge to the optima in a single step for a quadratic function $f(\theta) \coloneqq \frac{1}{2}\theta^\top A \theta$, where $A$ is positive-semi-definite and has condition number $\rho =1$.
This result holds irrespective of the number of dimensions $d$ of $\theta \in \mathbb R^d$.

Recall from \eqref{eqn:gdupdate}, the update rule in closed form can be expressed as,
\begin{align}
   \Theta_{t+1} \leftarrow \Theta_t + G \texttt{diag}(\eta), \label{eqn:updatee}
\end{align}
where for the proposed GG,  
\begin{align}
    \eta = - (G^\top G \circ L)^{-1} \left(G^\top \Theta \circ L\right) \mathbbm{1}.
\end{align}
Further, as $\nabla f(\theta) = A\theta$, $\eta$ can be expressed as,
\begin{align}
    \eta &= - (G^\top G \circ L)^{-1} \left(G^\top \Theta \circ L\right) \mathbbm{1} 
    % \\
    %
    = - (\Theta^\top A^\top A \Theta \circ L)^{-1} \left(\Theta^\top A^\top \Theta \circ L\right) \mathbbm{1}. 
    \label{eqn:qw}
\end{align}
Now notice that as the condition number of $A$ is one, there exists some $c$ such that  $A = cI$.
To see this, For a quadratic function we can presume A to be symmetric because \begin{align}
    \theta^\top A \theta = \frac{1}{2}\theta^\top (A + A^\top)\theta,
\end{align}
and so we can replace with $A$ with a symmetric matrix $\bar A \coloneqq \frac{1}{2}(A +A^\top)$.
Furthermore, convexity implies that the Hessian of $f(\theta)$ must be PSD, therefore $A$ must be PSD. 

Now as real-symmetric matrices are diagonalizable, using eigenvalue decomposition we get 
\begin{align}
    A=Q\Lambda Q^\top,
\end{align}  
where $Q$ is an orthogonal matrix and $\Lambda$ is a diagonal matrix consisting of the eigenvalues of $A$. Because the PSD condition ensures that all eigenvalues are nonnegative, and the condition number is 1, $(\text{max eigen value of } A)/((\text{min eigen value of } A))=1$. This implies that $\Lambda=cI$ for some constant $c$. Therefore, 
\begin{align}
    A = Q\Lambda Q^\top = Q(cI)Q^\top = cQQ^\top = cI.
\end{align}

Therefore, \eqref{eqn:qw} can be simplified as the following,
\begin{align}
    \eta &= - (c^2 \Theta^\top I^\top I \Theta \circ L)^{-1} \left(c\Theta^\top I^\top \Theta \circ L\right) \mathbbm{1} 
    % \\
    %
    = - \frac{1}{c}( \Theta^\top \Theta \circ L)^{-1} \left(\Theta^\top \Theta \circ L\right) \mathbbm{1} 
    = - \frac{1}{c} \mathbbm{1}, \label{eqn:eta}
\end{align}
when the inverse exists. Here the inverse may not exist when the two gradients are exactly parallel (for e.g., the parameters in $\Theta$ are exactly the same).  While this is unlikely in high dimensions, in Section \ref{sec:emp} we discuss a simple practical trick to avoid this.
Now, substituting the value of $\eta$ from \eqref{eqn:eta} in the update step, %\eqref{eqn:updatee},
\begin{align}
    \Theta_{t+1} &= \Theta_t + G \texttt{diag}(\eta)
    = \Theta_t - A \Theta_t \texttt{diag}(\frac{1}{c} \mathbbm{1}). \label{eqn:tempp} 
    \end{align}
 Finally, the result can be obtained by further simplifying \eqref{eqn:tempp},
    \begin{align}
       \Theta_{t+1} &= \Theta_t - c I \Theta_t \frac{1}{c} = \Theta_t - \Theta_t = 0.
\end{align}
As the minima of $f(\theta)$ is at $\theta=0$, the update procedure finds the minima in \textbf{one} update (two gradient calls when $N=2$).
This is in contrast with the existing gradient based optimizers that can solve this in one step \textit{if and only if} the step-size is chosen appropriately.
Further, notice that the above result can be generalized for any $N>1$, and also other functions $f$ with condition number $\rho=1$.
%
% In Appendix \ref{apx:asymanalysis}, we provide additional discussion for  settings where $\rho \neq 1$.
%

%

%

\subsubsection{For setting with $\rho \neq 1$}
\label{apx:asymanalysis}

The goal of this section is to understand the behavior of gradient grouping for the fundamental case of quadratics when $\rho \neq 1$.
Particularly, in the following paragraphs we provide analysis using Eigen decomposition for one step update using the gradient grouping algorithm with $N=2$.
%
% Recall from \eqref{eqn:gdupdate} that the update step for our method is given by,
% %
% \begin{align}
%     \theta_{t} &= \theta_{t-1} + \eta \nabla f(\theta_{t-1})
%     % \\
%     %
%     % &\overset{(a)}{=} \theta_{t-1} + \eta A\theta_{t-1}
%     % \\
%     %
%     % &= (I + \eta A) \theta_{t-1}. \label{eqn:iter}
%     % \\
%     %
%     % &\overset{(b)}{=} (I + \eta A)^t \theta_{0}, 
% \end{align}
% %
% where (a) follows because the function $f(\theta) \coloneqq \frac{1}{2}\theta^\top A\theta$.
%, and (b) follows by unrolling the updates till the first step.
%
We consider $f(\theta)\coloneqq \frac{1}{2}\theta^\top A\theta$, where $A$ is a symmetric positive definite matrix.
As the optima of $f$ is at $0$, by looking at the norm of the parameters in $\Theta$ we can analyze the improvement towards the optima.
%
% Notice that if $0 \leq \lVert I + \eta A \rVert_? < 1$, then if $t \rightarrow \infty$, $(I + \eta A)^t \rightarrow 0$ thereby ensuring convergence.  
%

To do so, we first expand $\eta$'s formula for $N=2$.
For brevity, we use $g_1$ and $g_2$ to represent the gradients at the two points $\theta_1$ and $\theta_2$.
From \thref{lemma:stepsize},
\begin{align}
    \eta &= - (G^\top G \circ L)^{-1} \left(G^\top \Theta \circ L\right) \mathbbm{1} 
    \\
    &= - 
    \begin{bmatrix}
    g_1^\top g_1 & - g_1 ^\top g_2 \\
    - g_2^\top g_1 &  g_2 ^\top g_2 
    \end{bmatrix}^{-1}
    \begin{bmatrix}
    g_1^\top \theta_1 & - g_1 ^\top \theta_2 \\
    - g_2^\top \theta_1 &  g_2 ^\top \theta_2 
    \end{bmatrix} \mathbbm{1} 
    \\
    &\overset{(a)}{=} - \frac{1}{D} 
    \begin{bmatrix}
    g_2^\top g_2 &  g_1 ^\top g_2 \\
     g_2^\top g_1 &  g_1 ^\top g_1 
    \end{bmatrix}
    \begin{bmatrix}
    g_1^\top (\theta_1  - \theta_2) \\
    - g_2^\top (\theta_1  -  \theta_2) 
    \end{bmatrix} 
    \\
    &= - \frac{1}{D} 
    \begin{bmatrix}
    (g_2^\top g_2)(g_1^\top (\theta_1 - \theta_2)) -  (g_1^\top g_2)( g_2^\top (\theta_1  - \theta_2)) \\
     (g_2^\top g_1) (g_1^\top (\theta_1  - \theta_2)) -  (g_1 ^\top g_1) (g_2^\top (\theta_1  -  \theta_2)) 
    \end{bmatrix}
    \\
    &= - \frac{1}{D} 
    \begin{bmatrix}
    ((g_2^\top g_2)g_1^\top - (g_1^\top g_2)g_2^\top) (\theta_1 - \theta_2)  \\
     ((g_2^\top g_1)g_1^\top -  (g_1 ^\top g_1) g_2^\top )(\theta_1  - \theta_2) 
    \end{bmatrix}
    \\
    &= - \frac{1}{D} 
    \begin{bmatrix}
    ((g_2^\top g_2)g_1 - (g_1^\top g_2)g_2)^\top A^{-1}(g_1 - g_2)  \\
     ((g_2^\top g_1)g_1 -  (g_1 ^\top g_1) g_2 )^\top A^{-1}(g_1  - g_2) 
    \end{bmatrix}
    \\
    &\overset{(b)}{=} - \frac{1}{D} 
    \begin{bmatrix}
    ((\theta_2^\top A^\top A \theta_2)\theta_1^\top A^\top - (\theta_1^\top A^\top A \theta_2)\theta_2^\top A^\top) (\theta_1 - \theta_2)  \\
     ((\theta_2^\top A^\top A \theta_1)\theta_1^\top A^\top -  (\theta_1 ^\top A^\top A \theta_1) \theta_2^\top A^\top )(\theta_1  - \theta_2) 
    \end{bmatrix}
    \\
    &= - \frac{1}{D} 
    \begin{bmatrix}
    ((\theta_2^\top A^2 \theta_2)\theta_1 - (\theta_1^\top A^2 \theta_2)\theta_2 )^\top A^\top (\theta_1 - \theta_2)  \\
     ((\theta_2^\top A^2 \theta_1)\theta_1  -  (\theta_1 ^\top A^2 \theta_1) \theta_2)^\top A^\top (\theta_1  - \theta_2) 
    \end{bmatrix} \label{eqn:expanded},
\end{align}
where in (a) the notation $D$ represents determinant when computing the inverse, and (b) follows from expanding the gradient of $f(\theta)$ and using the fact that $A$ is a symmetric matrix.

As the labeling of parameters are arbitrary, without loss of generality let us consider only the first row (step-size for updating $\theta_1$) to analyze one (of the two) parameters in the set.
%
% \begin{align}
%     \lVert I - \eta_1 A \rVert_? &\overset{?}{<} 1 
%     \\
%     %
%     \lVert D I - (((g_2^\top g_2)g_1 - (g_1^\top g_2)g_2)^\top A^{-1}(g_1 - g_2) ) A \rVert_? &\overset{?}{<} \lVert D \rVert_? 
%     \\
% \end{align}
% This discussion continues from the Equation \ref{eqn:expanded}. For simplification of algebra we assume a quadratic whose optimum is fixed at origin. In such a case using $A$ as the constant Hessian the gradient $g_i$ at any point $\theta_i$ is $A\theta_i$. We analyse a single step of the proposed update for the point $\theta_1$.
%
Using \eqref{eqn:gdupdate},
\begin{align}
\widetilde{\theta_1} = \theta_1 - \dfrac{1}{D}((\theta_2^{\top}A^2\theta_2)\theta_1 - (\theta_1A^2\theta_2)\theta_2)^{\top}A(\theta_1 - \theta_2)A\theta_1),
\end{align}
where we have used $\widetilde {\theta_1}$ to denote the value of $\theta_1$ after one update.
Now evaluating the norm of $\widetilde{\theta_1}$,
% \footnote{This is general enough/ not relying on the fact that $\theta^* = 0$, need to write it more explicitly}. 
%
\begin{align}
||\widetilde{\theta_1}|| &= ||\theta_1 - \dfrac{1}{D}((\theta_2^{\top}A^2\theta_2)\theta_1 - (\theta_1A^2\theta_2)\theta_2)^{\top}A(\theta_1 - \theta_2)A\theta_1)|| \\
&=  \dfrac{1}{D}|| (D)\theta_1 - ((\theta_2^{\top}A^2\theta_2)\theta_1 - (\theta_1A^2\theta_2)\theta_2)^{\top}A(\theta_1 - \theta_2)A\theta_1)|| \\
&=  \dfrac{1}{D}|| ((\theta_1^{\top}A^2\theta_1)(\theta_2^{\top}A^2\theta_2) - (\theta_1^{\top}A^2\theta_2)^2)\theta_1 - ((\theta_2^{\top}A^2\theta_2)\theta_1 - (\theta_1A^2\theta_2)\theta_2)^{\top}A(\theta_1 - \theta_2)A\theta_1)|| 
\\
&=  \dfrac{1}{D}|| {\color{red}(\theta_1^{\top}A^2\theta_1)(\theta_2^{\top}A^2\theta_2)\theta_1} - {\color{blue}(\theta_1^{\top}A^2\theta_2)^2\theta_1 }- 
{\color{red}(\theta_2^{\top}A^2\theta_2)(\theta_1^\top A \theta_1)  A \theta_1}
\\
& \quad\quad\quad 
+ (\theta_2^{\top}A^2\theta_2)(\theta_1^\top A \theta_2) A \theta_1 
+ {\color{blue}(\theta_1A^2\theta_2)(\theta_2^\top A \theta_1) A \theta_1} 
- (\theta_1A^2\theta_2)(\theta_2^\top A \theta_2) A \theta_1 ||
%
%((\theta_2^{\top}A^2\theta_2)\theta_1 - (\theta_1A^2\theta_2)\theta_2)^{\top}A(\theta_1 - \theta_2)A\theta_1)|| 
\end{align}
%
% If $A$ has condition number = 1, then multiplication with $A$ acts as nothing more than simple scalar multiplication; in which case we achieve 1 step optimization as detailed in Section \ref{sec:isometric}. This is because all terms which have the same degree in $\theta_1,\theta_2,A$ exactly cancel one another. As such
To simplify, we now regroup terms with the matching colors,
\begin{align}
||\widetilde{\theta_1}|| 
% &=
% 
% \dfrac{1}{D}|| ((\theta_1^{\top}A^2\theta_1)(\theta_2^{\top}A^2\theta_2) - (\theta_1^{\top}A^2\theta_2)^2)\theta_1 - ((\theta_2^{\top}A^2\theta_2)\theta_1 - (\theta_1A^2\theta_2)\theta_2)^{\top}A(\theta_1 - \theta_2)A\theta_1)||
% \\
&=  \dfrac{1}{D}||( {\color{red}(\theta_2^{\top}A^2\theta_2)\underbrace{((\theta_1^{\top}A^2\theta_1)I - (\theta_1^{\top}A\theta_1)A)\theta_1}_{\text{\normalfont S1}}} 
+ {\color{blue} (\theta_1^{\top}A^2\theta_2)\underbrace{((\theta_2^{\top}A\theta_1)A - (\theta_1^{\top}A^2\theta_2)I)\theta_1}_{\text{\normalfont S2}}} \\ 
& \quad \underbrace{((\theta_2^{\top}A^2\theta_2)(\theta_1^{\top}A\theta_2) - (\theta_1^{\top}A^2\theta_2)(\theta_2^{\top}A\theta_2) )}_{\text{\normalfont S3}}A\theta_1)||. \label{eqn:b0}
\end{align}

Now we bound the behavior of S1, S2, and S3 in terms of the eigenvalues of $A$. 
For the following, we denote the Eigenvalues of $A$ as $\lambda_i$ and the corresponding Eigenvector $v_i$. Note that as $A$ is positive definite, $\lambda_i > 0$. We decompose both $\theta_1,\theta_2$ in the Eigenbasis and call the coefficients associated with each basis vector as $\alpha_i$, and $\beta_i$, respectively. That is, $\theta_1 = \sum_i \alpha_iv_i$ and $\theta_2 = \sum_i \beta_iv_i$.
Under this expansion we get the following useful expressions:
\begin{align}
    &\theta_1^{\top}A\theta_1 = (\sum_i \alpha_iv_i)^{\top}A(\sum_i \alpha_iv_i) = (\sum_i \alpha_iv_i)^{\top}(\sum_i \lambda_i\alpha_iv_i) = \sum_i\lambda_i\alpha_i^2 \\
    &\theta_1^{\top}A\theta_2 = (\sum_i \alpha_iv_i)^{\top}A(\sum_i \beta_iv_i) = (\sum_i \alpha_iv_i)^{\top}(\sum_i \lambda_i\beta_iv_i) = \sum_i\lambda_i\alpha_i\beta_i \\
    &\theta_1^{\top}A^2\theta_2 = (\sum_i \alpha_iv_i)^{\top}A^2(\sum_i \beta_iv_i) = (\sum_i \alpha_iv_i)^{\top}(\sum_i \lambda_i^2\beta_iv_i) = \sum_i\lambda_i^2\alpha_i\beta_i\\
    &\theta_1^{\top}A^2\theta_1 = (\sum_i \alpha_iv_i)^{\top}A^2(\sum_i \alpha_iv_i) = (\sum_i \alpha_iv_i)^{\top}(\sum_i \lambda_i^2\alpha_iv_i) = \sum_i\lambda_i^2\alpha_i^2.
\end{align}

Plugging in these expressions to expand S1,S2 and S3,
\begin{align}
   ||\text{\normalfont S1}|| &= ||((\theta_1^{\top}A^2\theta_1)I - (\theta_1^{\top}A\theta_1)A)\theta_1|| \\
   &= ||(\sum_i \alpha_i^2\lambda_i^2)(\sum_i \alpha_i v_i) - (\sum_i \alpha_i^2\lambda_i)(\sum_i \alpha_i\lambda_iv_i)|| \\
   &= ||\sum_i\sum_j ( \alpha_i^2\alpha_j\lambda_i^2 - \alpha_i^2\alpha_j\lambda_i\lambda_j)v_j||\\
   &= ||\sum_i\sum_j  \alpha_i^2\alpha_j\lambda_i^2 (1-\frac{\lambda_j}{\lambda_i})v_j|| \\
   &= ||\sum_i\sum_j \alpha_i^2\alpha_j\lambda_i^2 (\frac{\lambda_j}{\lambda_i}-1)v_j|| \\
   &\leq ||\sum_i\sum_j  \alpha_i^2\alpha_j\lambda_i^2 (\rho-1)v_j|| \\
   &= ||\sum_i\sum_j  \alpha_i^2\alpha_j\lambda_i^2 v_j||(\rho - 1), \label{eqn:b1}
\end{align}

% where $\rho \coloneqq \frac{\lambda_\max}{\lambda_\min}$. Similarly,
where $\rho \coloneqq \frac{\lambda_{\max}}{\lambda_{\min}}$. Similarly,

\begin{align}
   ||\text{\normalfont S2}|| &= ||((\theta_2^{\top}A\theta_1)A - (\theta_1^{\top}A^2\theta_2)I)\theta_1|| \\
   &= ||(\sum_i \alpha_i\beta_i\lambda_i)(\sum_i \alpha_i\lambda_iv_i) - (\sum_i \alpha_i\beta_i\lambda_i^2)(\sum_i \alpha_iv_i)|| \\
   &= ||\sum_i\sum_j  \beta_i\alpha_i\alpha_j \lambda_i^2 (\frac{\lambda_j}{\lambda_i} - 1)v_j ||\\
   &\leq ||\sum_i\sum_j  \beta_i\alpha_i\alpha_j \lambda_i^2 v_j||(\rho - 1). \label{eqn:b2}
\end{align}
Similarly,
\begin{align}
   ||\text{\normalfont S3}|| &= ||(\theta_2^{\top}A^2\theta_2)(\theta_1^{\top}A\theta_2) - (\theta_1^{\top}A^2\theta_2)(\theta_2^{\top}A\theta_2) || \\ 
   &= ||(\sum_i \beta_i^2\lambda_i^2)(\sum_i \beta_i\alpha_i\lambda_i) - (\sum_i \alpha_i\beta_i\lambda_i^2)(\sum_i \beta_i^2\lambda_i)|| \\
   &= ||\sum_i\sum_j  \beta_i^2\beta_j\alpha_j(\lambda_i^2\lambda_j -\lambda_i\lambda^2_j) ||\\
   &= ||\sum_i\sum_j  \beta_i^2\beta_j\alpha_j\lambda_i^2\lambda_j(1-\frac{\lambda_j}{\lambda_i}) ||\\
   &\leq ||\sum_i\sum_j  \beta_i^2\beta_j\alpha_j\lambda_i^2\lambda_j||(\rho - 1). \label{eqn:b3}
\end{align}
Using \eqref{eqn:b1}, \eqref{eqn:b2}, and \eqref{eqn:b3} in \eqref{eqn:b0}, observe that distance of $||\widetilde \theta_1||$ from the optimum can be bounded in terms proportional to $(\rho -1)$.
%
% These provides an insight on how close the computed point is to the optimum in terms of how far $\theta_1,\theta_2$ were from the optimum and some bounded function of the distribution of all eigenvalues of $A$.
Notice that similar to discussions in \ref{sec:isometric}, this also shows that if $\rho = 1$ we reach the optimum in 1 step, and that if $\rho = 1 + \delta$, for some small $\delta$, the parameters can approach the optima quickly. 
Extending this analysis to characterize the evolution of parameters for the entire sequence of updates remain an interesting future direction.

\subsection{Gradient post-conditioning (instead of pre-conditioning)}

The proposed Gradient Grouping approach uses gradient evaluations at various points on the loss surface to estimate the optima.
From one perspective, it can be seen as obtaining the curvature information of the loss surface and using it to obtain the step-sizes.
This is reminiscent of (quasi-) Newton's methods which use Hessian to estimate the loss surface and get the appropriate direction and the step-size for parameter update.

\label{apx:postcondition}

Given a quadratic of the form $f(\theta) \coloneqq \frac{1}{2}\theta^\top A\theta$, the first and second order derivatives are $\nabla f(\theta) = A\theta$, and $\nabla^2 f(\theta) = A$, respectively.
According to the Newton method's update rule,
$$\theta_{t+1} \leftarrow \theta_{t} - A^{-1}g_{t}.$$
However, direct pre-multiplication with $A^{-1}$ requires both matrix inversion and a $d\times d$ matrix-vector multiplication, which can be computationally expensive. 
Several prior methods have studied quasi-Newton methods to minimize the computational requirements \citep{boyd2004convex}. 

In the following, we discuss a complementary method that draws some connections to the proposed GG method.
Let the Newton method's update for a set of points $\Theta \in \mathbb{R}^{d \times N}$ using $G \in \mathbb{R}^{d \times N}$ be,
\begin{align}
    \Theta_{t+1} \leftarrow \Theta_{t} - A^{-1}G_{t}, \label{eqn:qw12}
\end{align}
where $A^{-1}G_{t} \in \mathbb{R}^{d \times d} \times \mathbb{R}^{d \times N}.$
To reduce the computational cost, instead of pre-multiplying with $A^{-1}$, we aim to post multiply $G$ with another matrix $B \in \mathbb{R}^{N \times N}$, such that,
$$A^{-1}G_{t} \approx G_{t}B.$$
To do so, we can estimate $B$ such that,
$$B = \underset{B}{\text{argmin}} \lVert A^{-1}G_{t} - G_{t}B \rVert^2. $$
Solving for the above equation,
\begin{align}
    \frac{ \partial \lVert A^{-1}G_{t} - G_{t}B \rVert^2}{\partial B} &= G_t^\top(A^{-1}G_{t} - G_{t}B) = G_t^\top A^{-1}G_{t} - G_t^\top G_{t}B.
\end{align}
Equating the derivative to $0$, 
\begin{align}
     G_t^\top G_{t}B &= G_t^\top A^{-1}G_{t},
     \\
     B &= (G_t^\top G_{t})^{-1} (G_t^\top A^{-1}G_{t}). \label{eqn:shiv1}
\end{align}
Now as $\nabla f(\theta) = A\theta$,
we know
% \begin{align}
    $A^{-1} \nabla f(\theta)  = \theta. \label{eqn:shiv2}$
% \end{align}
%
Substituting this in \eqref{eqn:shiv1}, 
\begin{align}
    B &= (G_t^\top G_{t})^{-1} (G_t^\top \Theta_t). \label{eqn:shiv3}
\end{align}

Perhaps interestingly, $B$ in \eqref{eqn:shiv3} shares a striking resemblance with $\eta$ from \thref{lemma:stepsize}.
Particularly, as $\Theta_t$ is updated by $A^{-1}G_t \approx G_t B$ in \eqref{eqn:qw12}, $B$ plays a role similar to $\eta$ in Line 9 of Algorithm \ref{apx:Alg:1}. 
Nonetheless, there are important differences. For instance, $B$ does not have the Laplacian and the product with the ones vector. This makes $B$ a full-matrix, unlike $\eta$.

Notice that while \eqref{eqn:shiv3} is useful to draw some connections with GG, it is not practically useful.
Unlike how $\eta$ was obtained from \thref{lemma:stepsize}, $B$ was obtained by enforcing a specific quadratic structure, which is crucial for the above steps. 
For instance, if $f(\theta) \coloneqq \theta^\top A \theta + C^\top\theta$, where $C$ is some vector, then $B$ will depend on additional terms dependent on $C$ which we do not typically have access to.
%
% When the minima of the quadratic is not at $0$, then \eqref{eqn:shiv2} will not hold. %

% Notes on \eqref{eqn:shiv3}:
% \begin{itemize}[leftmargin=*]
%     \item $B$ in \eqref{eqn:shiv1} can also be written as $$B = \frac{\langle G_t, G_{t}\rangle_{A^{-1}}}{\langle G_t, G_{t}\rangle}.$$
%     %
%     What connections does it have to the max and min eigen value of $A$ and $A^{-1}$? Also note any possible equivalences to the $\frac{1}{\beta}$ step size common in the literature.
%     %
%     \item While $B$ has all entries in $\mathbb{R}^{N \times N}$,  $\eta$ is a diagonal matrix in Lemma 1.  
%     %
%     %
%     \item The Laplacian and the product with all ones are still missing.
%     %
%     \item 
%     %
%     \item Perhaps both the above points can be fixed by using combination of additional points.
%     %
%     \item What about more general convex functions?
% \end{itemize}

\section{Algorithm}\label{sec:algo}
In this section, we discuss the proposed algorithm.
As we maintain a group of parameters that compute gradients in parallel and use it to help each other obtain a common goal,
we call our approach \textit{Gradient Grouping}.
A pseudo code is presented in Algorithm \ref{apx:Alg:1}.
	\IncMargin{1em}
	\begin{algorithm2e}
	\caption{Gradient Grouping}\label{apx:Alg:1} 
		\textbf{Input} $N \in \mathbb{N}_{\geq 2}$  \Comment{Parameter set size} \\
		\textbf{Randomly Initialize} $\Theta \in \mathbb{R}^{d\times N}$  		\Comment{Parameter set}\\
		\textbf{Zero Initialize} $G \in \mathbb{R}^{d\times N}$  		\Comment{Gradient set}\\
		$L = N\cdot I - \mathbbm{1}\mathbbm{1}^\top$ \\
		\For{$t = 1,2,3,...$}{
			\For {$idx = 1,2,..., N$ (in parallel)}{
			$G_t[:, \,idx] = \nabla f(\Theta_t[:, \,idx])$  \Comment{Compute gradients}
    		}
    		
    		\vspace{5pt}
    % 		\nonl \textcolor[rgb]{0.5,0.5,0.5}{\# Compute the step sizes}
    % 		\\
    		$\eta_t = - ( {G_t}^\top G_t \circ L)^{-1} \left( {G_t}^\top \Theta_t \circ L\right) \mathbbm{1}.$ \Comment{Lemma 1.}
    		\\
    % 		\vspace{5pt}
    % 		\nonl \textcolor[rgb]{0.5,0.5,0.5}{\# Update parameters}
    % 		\\
    		$\Theta_{t+1} \leftarrow \Theta_t + G_t \texttt{diag}(\eta_t)$ \Comment{Update parameters}
		}
\end{algorithm2e}
	\DecMargin{1em}

In Line 1, the number of parameters, $N$, to be considered in parallel is taken as the input.
In Lines 2 and 3, the parameter matrix and the gradient matrix are initialized.
As the Laplacian matrix, $L$, will always be fixed during updates, Line 4 creates and stores it beforehand.
Lines 5 to 9 correspond to the main optimization loop.
To avoid overloading the notation, we use super-script $t$ to denote the values in the $t^{th}$ iteration of optimization. 
In Lines 6 and 7, gradients are evaluated for all the $N$ parameters in parallel.
Using Lemma \ref{lemma:stepsize}, the step-sizes are obtained in Line 8.
Finally, in Line 9, the parameters are updated using the gradients and the step sizes obtained in the Line 8.

\subsection{Practical Modifications} Notice that the inverse of $(G^\top G \circ L)$ may not exist if the gradients are parallel (for e.g., $\Theta$ contains exactly the same parameters, or if the parameters are different but the gradients are facing each other, etc. )
To ensure that the inverse always exists in practice, we start all the parameters from a different initialization.
To make this condition also holds during the learning process, we multiply $\eta$ with a scaling factor $0 <\alpha < 1$, such that, $\eta \leftarrow \alpha \eta$.
This ensures that even when the extrapolated gradients are intersecting, the parameters will never be at the same point after an update.
More generally, we ensure invertibility  of $(G^\top G \circ L)$ by clipping its minimum Eigen value to a small positive constant $\epsilon$.
As the $(G^\top G \circ L)$ matrix belongs to $\mathbb{R}^{N \times N}$, the computation cost of this operation is negligible.
%
%
% For practical convenience we  ensure invertibility of $(G^\top G \circ L)$ by clipping its minimum Eigen value to a small positive constant $\epsilon$.
%
Specifically, leveraging the Eigen value decomposition, let 
$$G^\top G \circ L = V^\top \Lambda V.$$ 
Note that $G^\top G$ is PSD and it is known that a Laplacian matrix $L$ is also PSD.
Now using Schur's theorem, which asserts that the Hadamard product of two PSD matrices is also a PSD matrix, it can be established that $G^\top G \circ L$ is also a PSD matrix \citep[Theorem 3.4]{million2007hadamard}, \citep{schur1911bemerkungen,enwiki:1016337944}.
To convert $G^\top G \circ L$ into a PD matrix, we create a $\Lambda ^+ \coloneqq \texttt{clip}(\Lambda, \text{min}=\epsilon, \text{max}=\infty)$ and use $V^\top \Lambda ^+ V$ instead of $G^\top G \circ L$ every time.
As the $G^\top G \circ L$ matrix belongs to $\mathbb{R}^{N \times N}$, where $N < 10$ for all our experiments, the computation cost of this operation is negligible.
%
% Compared to adding $\lambda$ in the trace, this practical modification ensures that $G^\top G \circ L$ is regularized only when it is near singular.

The two practical modifications to our method introduced the hyper-parameters: $\alpha$ and $\epsilon$.
In our experiments, we keep $\alpha = 0.9$ and $\epsilon = 1e-4$ throughout and perform no problem specific hyper-parameter tuning.

\subsection{Computational and Memory Complexity}
An important property of our method is that it is cheap in both computational and memory requirements.
Notice that $(G^\top G \circ L)$ is a $\mathbb{R}^{N \times N}$ matrix, irrespective of the number of dimensions $d$ of the parameters space.
As  $N$ can be as low as 2, and in general $N \ll d$, computational cost of inverting $(G^\top G \circ L)$ is negligible.
Apart from that, as each processor only stores one copy of the parameter $\theta$ and its corresponding gradient $\nabla f(\theta)$, its memory and computational cost is in the same order as that of gradient descent.
Further, the proposed method leverages curvature information by  using the $N$ individual parameters and their respective first-order gradients, and thus second-order derivatives, $\nabla^2 f(\theta)$, are never required to be computed.

\section{Experimental Details}

Implementations of the baseline algorithms: L-BFGS, SGD, Nesterov's method, Adam, and Rmsprop were based on the default routines available in PyTorch \citep{paszke2017automatic}.
For the experiments, convexity was ensured using a single layer neural network implementation in PyTorch \citep{paszke2017automatic} with softmax classification loss. 

The following datasets were used to report the empirical results.

\begin{table}[h]
\begin{minipage}{0.8\textwidth}
\begin{center}
    
\begin{tabular}{c|c}
     Dataset & Input dimensions\\
     \hline
     CIFAR\tablefootnote{https://www.cs.toronto.edu/~kriz/cifar.html} &  $30720$\\
     MNIST\tablefootnote{http://yann.lecun.com/exdb/mnist/} & $7840$\\
     Extended-MNIST \tablefootnote{https://www.westernsydney.edu.au/bens/home/reproducible\_research/emnist} & $29008$ \\
     STL10\tablefootnote{https://cs.stanford.edu/~acoates/stl10/} & $276480$ \\
    SVHN\tablefootnote{http://ufldl.stanford.edu/housenumbers/}
    & $30720$ \\
    Fashion-MNIST\tablefootnote{https://github.com/zalandoresearch/fashion-mnist} & $7840$
\end{tabular}
\end{center}
\end{minipage}
\end{table}

For Figures \ref{fig:performance} and \ref{fig:N}, each hyper-parameter setting was run till 100 epochs.
In total 10 different seeds were used for each hyper-parameter setting to get the standard error.
The authors had shared access to a computing cluster, consisting of 50 compute nodes with 28 cores each,
which was used to run all the experiments.

\clearpage
\section{Ablation Study}
\label{apx:ablation}
\textbf{Impact of N:} 
In problem setups, where more than $2$ parallel resources are available, a natural question to ask is how well does the proposed method perform then?
To answer this question, we conducted experiments with $N = (2, 4, 6, 8, 10)$ for all of the previous domains.
The results are presented in Figure \ref{fig:N}.
We observe a consistent improvement in performance as the number of available resources increase.
This showcases that while just $2$ parallel gradients sufficed to provide acceleration in high-dimensional problems, our method further benefits when more resources are available.  

\begin{figure*}
    \centering
    \includegraphics[width=0.29\textwidth]{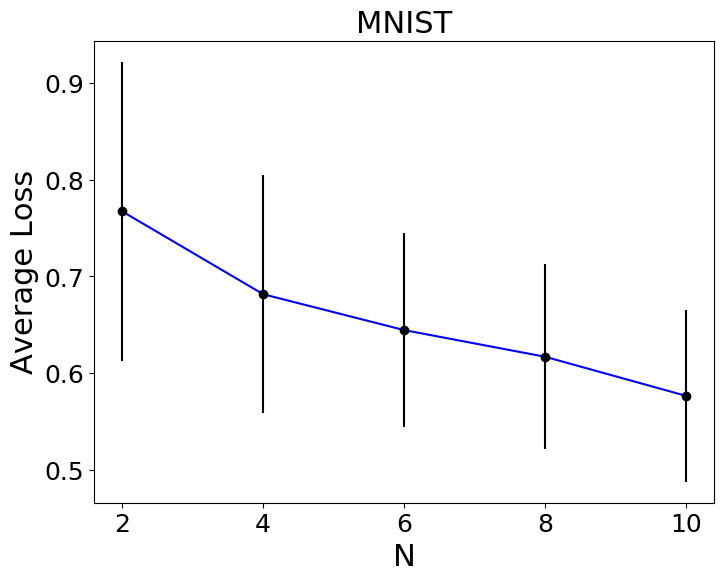} \quad   \includegraphics[width=0.29\textwidth]{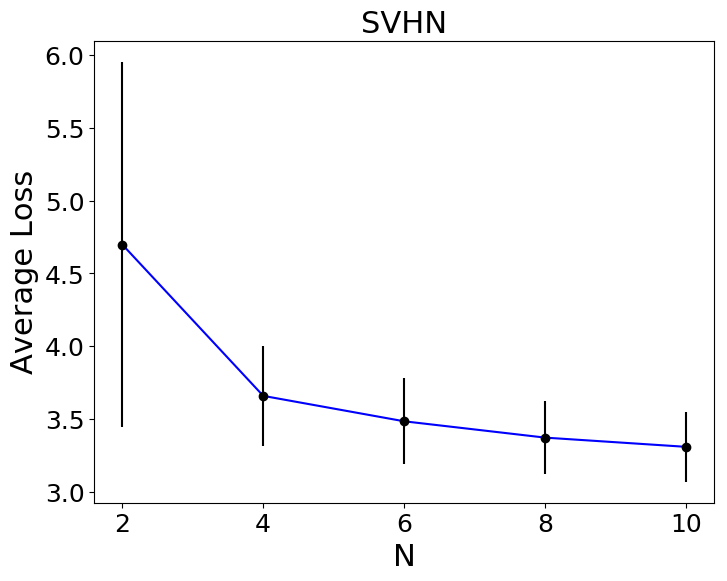} \quad
    \includegraphics[width=0.29\textwidth]{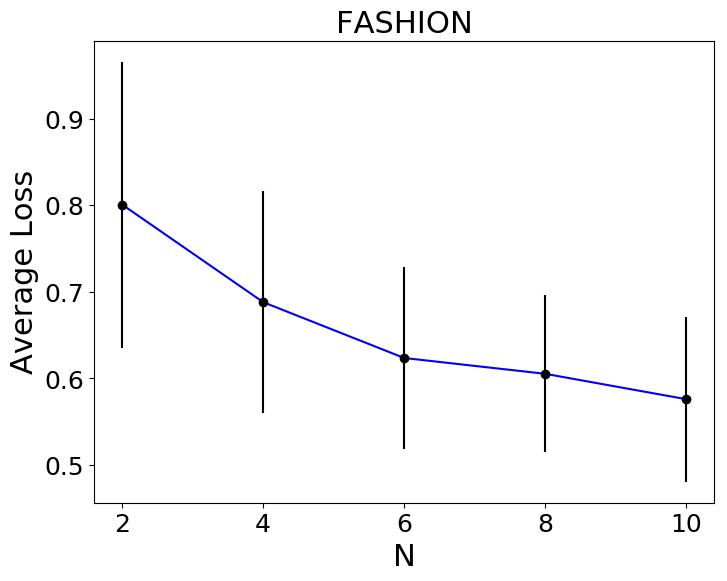}
\\
    \includegraphics[width=0.29\textwidth]{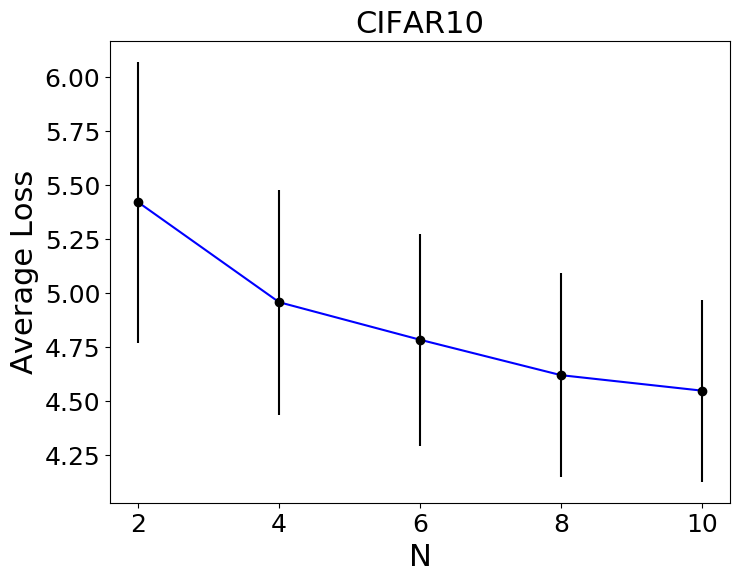} \quad
    \includegraphics[width=0.29\textwidth]{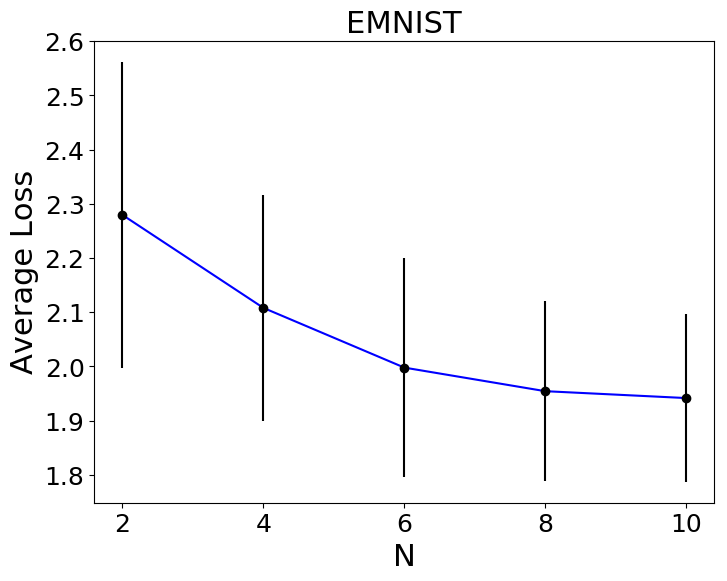} \quad
    \includegraphics[width=0.29\textwidth]{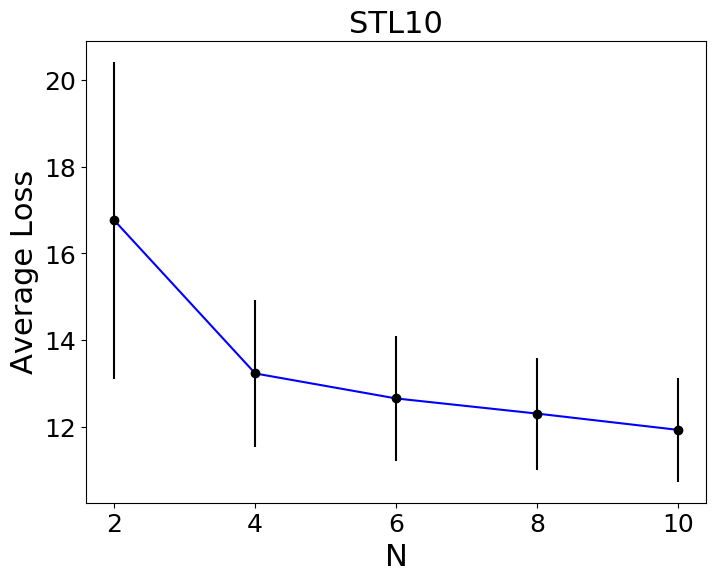}
    \caption{Performance improvement of the proposed Gradient Grouping algorithm for all the problems as the number of available parallel resources increase. 
    The error bars correspond to standard deviation obtained using $10$ trials.}
    \label{fig:N}
    \vspace{-10pt}
\end{figure*}

\end{document}